\newtheorem{theorem}{Theorem}
\newtheorem{lemma}{Lemma}
\newtheorem{proof}{Proof}
\pgfplotsset{compat=1.15}
\begin{document}
\title{Blind Decision Making: Reinforcement Learning with Delayed Observations}
\author{Mridul Agarwal, Vaneet Aggarwal \thanks{The authors are with Purdue University, West Lafayette IN 47907, USA, email:\{agarw180,vaneet\}@purdue.edu.}}

\maketitle

\begin{abstract}
Reinforcement learning typically assumes that the state update from the previous actions happens instantaneously, and thus can be used for making future decisions. However, this may not always be true. When the state update is not available, the decision taken is partly in the blind since it cannot rely on the current state information. This paper proposes an approach, where the delay in the knowledge of the state can be used, and the decisions are made based on the available information which may not include the current state information. One approach could be to include the actions after the last-known state as a part of the state information, however, that leads to an increased state-space making the problem complex and slower in convergence. The proposed algorithm gives an alternate approach where the state space is not enlarged, as compared to the case when there is no delay in the state update. Evaluations on the basic RL environments further illustrate the improved performance of the proposed algorithm.
\end{abstract}

\section{Introduction and Related Work}
A reinforcement learning (RL) agent models the world into states, which come from the set $\mathcal{S}$. The agent, at time $t$ and in state $s_t$, chooses an action $a_t \in \mathcal{A}$. After choosing an action, the agent receives a reward $r_t$ and moves to state $s_{t+1}$ at the next time step $t+1$. The goal of the agent is to find a policy to select an action that maximizes the total cumulative reward the agent collects over $T$ time steps, where $T$ may be unbounded \cite{sutton2018reinforcement}. Applications of RL are continuously increasing in domains that can be formulated using state, action, and rewards. Many such applications include cloud scheduling \cite{Arabnejad:2017:CRL:3101112.3101121}, robot manipulation \cite{clavera2018learning}, financial trading \cite{deng2017deep}, and microgrid management \cite{kuznetsova2013reinforcement}. However, many of these works assume that the state update is immediately observed after the action is played, which may not be true in general applications. This paper proposes an algorithm and evaluates it when the state updates are not immediately available. 

We note that in many applications, the state update information is not available instantaneously. As an example, micro-grid control may have stochastic delays because of the communication link and these delays may have adverse impact on the system \cite{liu2015impact}. The authors of \cite{changuel2012online} studied the impact of delays in video scheduling for mobile devices. For a mobile device, channel state information might be delayed since the streaming decisions are made many seconds before the content is played (to reduce the rebuffering events), which impacts the video streaming algorithms using RL based control. 360-degree video streaming further adds a challenge by introducing different qualities in different tiles in a frame, and thus the head movement prediction can be used for efficient streaming \cite{ghosh2017rate}. However, the head position while viewing a frame is not available for the streaming algorithm, which makes the decision before the content is played and is thus making decisions without the current state information. Similarly, cloud-based scheduling will observe stragglers only after a certain time threshold, and network delays may cause financial losses in trading. Delays can also arise from processing delays on input images or computationally complex data processing.  The issue of such delay in the availability of state information limits the use of RL for practical applications \cite{mahmood2018setting}. 

As studied by \cite{katsikopoulos2003markov}, delays may be of three types, 1) observation delays, where observations for state updates are delayed, 2) action delays, where actions may take effect after a certain delay, and 3) cost/reward delays, where rewards are not obtained till certain time after an action is taken in any state. They show that action delays are equivalent to observation delays. If observations are delayed then the agent plays action with knowledge of the last observed state and the actions happened in the meantime. Similarly, if actions are delayed, the agent schedules actions for the future with the same information. In case of delays in rewards updates, algorithms train using mini-batches where a mini-batch consists of state observations, actions taken, and available rewards received for some duration. We note that even though the current state information is not known, the decisions still have to be made. In this paper, we assume only observation or action delays. We further assume that rewards are provided to the agent along with state updates.

The authors of \cite{altman1992closed} showed that for Markov Decision Process (MDP) where each observation is delayed by $d$ steps, an equivalent MDP can be constructed with an augmented state space where previous $d$ actions are appended to the currently known state. Thus, the new state space $\mathcal{S}' = \mathcal{S}\times \mathcal{S}^d$ with $s_t'= (s_{t-d}, a_{t-d}, a_{t-(d-1)}, \cdots, a_{t-1})$. This formulation restores the problem structure back to an MDP, and any RL algorithm can now be applied to the new MDP. We point to three limitations of this approach.  

The first limitation is that this approach does not scale to stochastic delays. In order to resolve this, \cite{katsikopoulos2003markov} proposed a new solution by assuming that the maximum delay is bounded. They assume that if the delays are more than a threshold $n$, then the algorithm freezes, and would not take any action. However, many real-time systems might not enjoy this liberty of freezing the execution of the algorithm.

The second limitation is that the expansion of the state space might not be efficient for implementation because of the increased storage complexity and exponentially larger time to converge to the optimal solution. In order to alleviate this, the authors of \cite{walsh2009learning} proposed an algorithm to play action $a_t$ which is optimal for the state in which the agent is most likely in, or $a_t = \arg\max_{a_t} Q(\arg\max{p_{s_t}(s_t}|s'_t), a_t)$. It was assumed that the probability of not being in the most likely state is bounded by $\epsilon$, where $\epsilon$ is small enough. Note that for Markov Decision Process (MDP) where state distributions are not concentrated, this assumption might not hold.

The third limitation for augmenting the state space is that the regret analysis of the MDP shows that the regret of this algorithm scales as  $|\mathcal{A}|^{d/2}$, where $d$ is the delay in the availability of the state information \cite{jin2018q}. For large $d$, the gap may be large enough for the approach to have significantly decreased performance.


Having pointed out the limitations in the prior works, we propose a solution that aims to alleviate these issues. The proposed algorithm, called {\it Expectation-Maximization Q-learning} (EMQL), takes an action that maximizes the expected gain of true MDP across all possible states conditioned over the last known state $s_{t-d}$, and actions taken till time $t$. EMQL for delayed reinforcement learning has the following properties:
\begin{itemize}
    \item {\bf Space efficient}: The proposed algorithm does not use an augmented MDP to determine the action. It, however, uses the fact the current true state comes from the probability distribution generated by augmented MDP.
    \item {\bf Robust under deviation from most likely state}: The algorithm selects the action that maximizes the expected value $Q$ function. Thus, even though the distribution is not concentrated around a single state, the distribution is efficiently utilized.
    \item {\bf Handles stochastic delays:} The algorithm works well with stochastic delays, as well as missed information.
\end{itemize}

EMQL is evaluated on Frozen Lake ($8\times 8 grid)$, and Cart Pole environments of OpenAI Gym platform \cite{Brockman2016OpenAIG}. The results for delayed settings are compared with Extended MDP formulation of \cite{altman1992closed}, MBS algorithm given by \cite{walsh2009learning}, and dQ algorithm proposed by \cite{schuitema2010control} respectively. The metric of comparison is the total reward, collected in each episode.

The rest of the paper is organized as follows. 
Section \ref{forumulation} defines the problem under consideration. Section \ref{algorithm} presents the proposed algorithm. Evaluation and comparison results are presented in section \ref{evaluation}. Section \ref{sec:concl} concludes the paper with a brief discussion.  
\section{Formulation} \label{forumulation}
We consider a Markov Decision Process $\mathcal{M}$, with set of states denoted by $\mathcal{S}$, and set of actions denoted by $\mathcal{A}$. At time $t$, the environment is in state $s_t \in \mathcal{S}$. The definitions are mostly consistent with those in \cite{sutton2018reinforcement,puterman2014markov}

At any time $t$, the agent chooses action $a_t \in \mathcal{A}$ based on its knowledge about the current state. On playing the action $a_t$, environment rewards the agent with $R_t$, which is random variable conditioned on environment state $s_t$, and action chosen by agent at time $t$. The maximum reward the agent can receive at any time step is $R_{max}$. The goal of the agent is to maximize the discounted cumulative rewards it receives. The discount factor $\gamma\in[0,1)$ denotes the importance of future rewards.
\begin{align}
    \mathcal{R} &= \sum_{t=0}^\infty \gamma^t R_t
\end{align}

Expected reward when action $a$ is taken in a state $s$ is defined as $r(s,a)$
\begin{align}
    r(s,a) = \mathbb{E}\left[R_t|s_t = s, a_t = a\right]
\end{align}

The probability distribution of next state $s_{t+1}$ conditioned on current state $s_t$ and action $a_t$ is denoted by $p(s_t, a_t, s_{t+1})$. Shorthand notation by dropping the subscripts is denoted as,
\begin{align}
    p(s, a, s') &= P(s_{t+1} = s'|s_t = s, a_t =a)
\end{align}
Agent uses a policy $\pi$ to select an action. $\pi$ is defined as the probability distribution over actions given the state.
\begin{align}
    \pi(a|s) = \mathbb{P}(a_t = a|s_t = s)
\end{align}
The value function $V^\pi(s)$ of a state $s$ is defined as the expected value of sum of discounted rewards which agent can receive over time starting from state $s$ and choosing actions according to the policy $\pi$.
\begin{align}
    V^\pi(s) &= \mathbb{E}_{\pi}\left[\sum_{t=t_0}^\infty\gamma^{t-t_0}R_t\Big|s_{t_0} = s\right]
\end{align}

This makes the maximum possible value of $V^\pi(s)$ as $\frac{R_{max}}{1-\gamma}$. Similarly action-value function $Q^\pi(s,a)$ is defined as the expected cumulative rewards which agent receives in state $s$ on taking action $a$ and then following policy $\pi$,
\begin{align}
    Q^\pi(s) &= \mathbb{E}_{\pi}\left[\sum_{t=t_0}^\infty\gamma^{t-t_0}R_t\Big|s_{t_0} = s, a_{t_0} = a\right]
\end{align}

For both value function $V^\pi(s)$ and action-value function $Q^\pi(s,a)$ the expectation is taken over the states which are distributed according to the transition dynamics of the MDP $\mathcal{M}$ and actions which are distributed according to the policy $\pi$. We use only $\pi$ in the subscript for expectation as we can only control the policy. Optimal policy $\pi^*$ is defined as the policy which maximizes the value function for all states.
\begin{align}
    V^{*}(s) &= V^{\pi^*}(s)\\
    &= \sup_{\pi} V^\pi (s)\ \ \ \forall\ s\in\mathcal{S}
\end{align}

The optimal policy $\pi^*$ gives an optimal value function $V^*$ and an optimal Q-function $Q^*$, which are related as
\begin{align}
    Q^*(s,a) &= \mathbb{E}_{\pi}\left[r(s_t,a_t)\right] + \gamma\mathbb{E}\left[V^*(s_{t+1})|s_t = s, a_t =a \right]\\
    V^*(s) &= \max_{a\in\mathcal{A}}Q^*(s,a)\ \ \ \forall\ s\in\mathcal{S}
\end{align}

In practical RL algorithms, agent deploys some strategy to calculate an estimate of Q-function $\hat{Q}$. Some common strategies are discussed in \cite{sutton2018reinforcement}. The agent selects the action greedily based on the estimate of Q-function $\hat{Q}$ as
\begin{align}
    a_t = \arg\max_{a_t \in \mathcal{A}} \hat{Q}(s_t, a_t)
\end{align}

The delay $\Delta$ is a random variable denoting the delay of the system. All realizations of delay $d$ are assumed to be a non-negative integer. At time $t$, the last known state for the agent is $s_{t-d}$. The actions played in $d$ time steps are $a_{t-d}, a_{t-(d-1)}, \cdots, a_{t-1}$.

We assume that at the beginning of any episode, all the delayed observations of previous episodes are available. This also means that the observations from any of the previous episodes are not corrupting the observations received in current episode.
\section{Proposed Policy and Bounds} \label{policy}
For a system with $d$ delays, we construct an extended MDP $\widetilde{\mathcal{M}}$ which has state $\tilde{s}_t$ as $(s_{t-d}, a_{t-d}, a_{t-(d-1)}, \cdots, a_{t-1})$. The two MDPs $\widetilde{\mathcal{M}}$, and $\mathcal{M}$ share the same action space $\mathcal{A}$, so we will not change the notation for actions. For everything else we will put a tilde over the variables for the augmented MDP. The policy $\tilde{\pi}$ now selects an action based on $\tilde{s}$ or the tuple $(s_{t-d}, a_{t-d}, a_{t-(d-1)},\cdots,a_{t-1})$. The corresponding Q-function for a policy $\tilde{\pi}$ over $\widetilde{\mathcal{M}}$ becomes,
\begin{align}
    \widetilde{Q}^{\tilde{\pi}}(\tilde{s}, a) &= \mathbb{E}_{\tilde{\pi}}\left[R_t|\tilde{s}_t=\tilde{s},a_t=a\right] +\nonumber \\
    &\gamma\sum_{\tilde{s}_{t+1}\in\mathcal{\widetilde{S}}}\widetilde{V}^{\tilde{\pi}}(\tilde{s}_{t+1})\mathbb{P}\left[\tilde{s}_{t+1}|\tilde{s}_t = \tilde{s}, a_t = a\right]
\end{align}

Using this construction we present the key lemma based on which we construct our policy.
\begin{lemma}\label{key_lemma}
Expected reward obtained by agent in augmented state $\tilde{s}_t$ by taking an action $a$, is related to the true state $s_t$ of environment as
\begin{small}
\begin{align}
    \tilde{r}(\tilde{s}_t, a) &= \sum_{s\in\mathcal{S}}r(s, a)p(s|\tilde{s}_t)\label{eq:reward_equivalence}
\end{align}
\end{small}
\begin{proof}
Reward $R_t$ generated by the environment is oblivious to the state maintained by the agent. The expected reward for the agent is $\tilde{r}(\tilde{s}_t,a)$
\begin{small}
\begin{align}
    \tilde{r}(\tilde{s}, a) &= \mathbb{E}\left[R_t|\tilde{s}_t=\tilde{s}, a_t = a\right]\\
    &= \sum_{s\in\mathcal{S}}\mathbb{E}\left[R_t|s_t = s,\tilde{s}_t=\tilde{s}, a_t = a\right]\mathbb{P}(s_t = s|\tilde{s}_t = \tilde{s})\nonumber\\
    &=\sum_{s\in\mathcal{S}}\mathbb{E}\left[R_t|s_t = s, a_t = a\right]\mathbb{P}(s_t = s|\tilde{s}_t = \tilde{s}) \label{eq:reward_only_env_state}
\end{align}
\end{small}
Equality (\ref{eq:reward_only_env_state}) follows from the fact that reward does not depend on the state maintained by agent, but on the environment's state.
\end{proof}
\end{lemma}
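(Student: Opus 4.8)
The plan is to reduce the augmented-MDP reward $\tilde{r}(\tilde{s}_t, a)$ to a mixture of true-MDP rewards $r(s,a)$ by marginalizing over the unobserved current state $s_t$. First I would unfold the definition $\tilde{r}(\tilde{s}_t, a) = \mathbb{E}[R_t \mid \tilde{s}_t = \tilde{s}, a_t = a]$ and apply the law of total expectation, inserting the latent variable $s_t$ and summing over all $s \in \mathcal{S}$. This yields
\begin{align}
\tilde{r}(\tilde{s}, a) = \sum_{s \in \mathcal{S}} \mathbb{E}\left[R_t \mid s_t = s, \tilde{s}_t = \tilde{s}, a_t = a\right]\, \mathbb{P}(s_t = s \mid \tilde{s}_t = \tilde{s}),
\end{align}
so that the conditional state distribution $p(s \mid \tilde{s}_t) = \mathbb{P}(s_t = s \mid \tilde{s}_t = \tilde{s})$ appears naturally as the mixing weight.

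The second step is to argue that, once we condition on the true state $s_t$ and the played action $a_t$, the reward random variable $R_t$ no longer depends on the agent's internal bookkeeping $\tilde{s}_t$. The reward is emitted by the environment as a function of $(s_t, a_t)$ alone, so the extra conditioning on $\tilde{s}_t$ is redundant and $\mathbb{E}[R_t \mid s_t = s, \tilde{s}_t = \tilde{s}, a_t = a] = \mathbb{E}[R_t \mid s_t = s, a_t = a] = r(s,a)$. Substituting collapses the sum to $\sum_{s \in \mathcal{S}} r(s,a)\, p(s \mid \tilde{s}_t)$, which is exactly the claim in \eqref{eq:reward_equivalence}.

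The main obstacle I anticipate is making this conditional-independence step rigorous rather than merely intuitive. Formally, one must verify that $R_t$ is independent of $\tilde{s}_t$ given $(s_t, a_t)$, i.e. that the augmented state carries information about the reward only through the current true state. This is where the Markov structure of $\mathcal{M}$ does the work: $\tilde{s}_t = (s_{t-d}, a_{t-d}, \dots, a_{t-1})$ is a deterministic encoding of past observations and actions, and the Markov property guarantees that, once $s_t$ is fixed, the past — and hence $\tilde{s}_t$ — is conditionally independent of the present reward. Everything else is a routine application of the tower property, so no delicate estimates or limiting arguments should be required.
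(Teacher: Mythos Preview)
Your proposal is correct and follows essentially the same route as the paper: unfold the definition of $\tilde{r}(\tilde{s},a)$, apply the law of total expectation over the latent true state $s_t$, and then drop the redundant conditioning on $\tilde{s}_t$ because the environment's reward depends only on $(s_t,a_t)$. Your explicit appeal to the Markov property to justify the conditional-independence step is slightly more careful than the paper's one-line justification, but the argument is otherwise identical.
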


Lemma \ref{key_lemma} states that the expected reward received on taking action $a$ in state $\tilde{s}$ is the expected reward received by taking action $a$ in the unobserved stated conditioned on $\tilde{s}$. Based on Lemma \ref{key_lemma}, a myopic policy, which maximizes immediate expected return for the agent, selects greedy action $a_t(\tilde{s})$ as
\begin{small}
\begin{align}
    a_t = \arg\max_{a\in\mathcal{A}}\left(\mathbb{E}_s\left[R(s,a)|\tilde{s}_t\right]\right)
\end{align}
\end{small}
Inspired by the myopic policy, we now propose a policy for working with delayed state updates.

The agent would be able to maximize its expected discounted cumulative rewards if it has oracle access which could return the optimal action for unobserved current state $s_t$ of the environment. However, since such oracle access is not available, we settle for a policy that assumes that the state at the next time step will be available. This policy maximizes the sum of immediate reward and the expected value of the next state under the optimal policy for $\mathcal{M}$, or
\begin{align}
    \tilde{\pi}(a_t|\tilde{s}_t)= \begin{cases}  
                1, & \text{if } a_t = \arg\max_{a\in\mathcal{A}}\mathbb{E}\left[Q^*(s,a)|\tilde{s}_t\right],\\ 
                0, & \text{otherwise }
            \end{cases}\label{eq:final_policy}
\end{align}

The following theorem provides bounds on minimum value an augmented state $\tilde{s}\in\widetilde{\mathcal{S}}$ would fetch for the agent. That is, value function using the policy defined in Equation (\ref{eq:final_policy}) ensures the minimum value given in Theorem \ref{final_theorem}.
\begin{theorem}\label{final_theorem}
If the agent follows policy as given in (\ref{eq:final_policy}), for an augmented MDP $\widetilde{\mathcal{M}}$, then the value of each state $\tilde{s}\in\widetilde{\mathcal{S}}$ satisfies the following lower bound,
\begin{align}
    \widetilde{V}^{\tilde{\pi}}(\tilde{s}) &\geq \mathbb{E}_{s|\tilde{s}}\left[{V}^{\pi^*}(s)\right] -\frac{R_{max}}{(1-\gamma)^2}\left(1-\frac{1}{|\mathcal{A}|}\right)
\end{align}

where $\pi^*(a_t|s_t)$ is oracle aided policy which gives the optimal action for true MDP $\mathcal{M}$.
\end{theorem}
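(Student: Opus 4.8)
The plan is to control the one-step suboptimality of the committed action and then propagate it through the Bellman recursion of $\widetilde{\mathcal{M}}$. Write $U(\tilde{s}) := \mathbb{E}_{s|\tilde{s}}[V^{\pi^*}(s)] = \sum_{s}p(s|\tilde{s})\max_a Q^*(s,a)$ for the target quantity, and let $a^\star = \arg\max_a \mathbb{E}[Q^*(s,a)|\tilde{s}]$ be the action selected by the policy in (\ref{eq:final_policy}). The first step is a purely algebraic bound on the per-step gap
\[
g(\tilde{s}) := U(\tilde{s}) - \max_a \mathbb{E}[Q^*(s,a)|\tilde{s}] = \sum_{s}p(s|\tilde{s})\max_a Q^*(s,a) - \max_a \sum_{s}p(s|\tilde{s})Q^*(s,a).
\]
Using that a maximum over $\mathcal{A}$ dominates the uniform average over $\mathcal{A}$, and that the sum of the (non-negative) $Q^*(s,a)$ over actions dominates their maximum, I would lower bound the second term by $\frac{1}{|\mathcal{A}|}\sum_{s}p(s|\tilde{s})\max_a Q^*(s,a)$. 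This gives $g(\tilde{s}) \le \left(1-\frac{1}{|\mathcal{A}|}\right)\sum_{s}p(s|\tilde{s})\max_a Q^*(s,a) \le \frac{R_{max}}{1-\gamma}\left(1-\frac{1}{|\mathcal{A}|}\right)$, since every $Q^*$ is bounded by $R_{max}/(1-\gamma)$.

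The second step is a consistency identity linking the two MDPs. By Lemma \ref{key_lemma}, $\tilde{r}(\tilde{s},a^\star)=\sum_{s}r(s,a^\star)p(s|\tilde{s})$, and the Bellman equation for $Q^*$ in $\mathcal{M}$ gives $\sum_{s}p(s|\tilde{s})Q^*(s,a^\star) = \tilde{r}(\tilde{s},a^\star) + \gamma\sum_{s}p(s|\tilde{s})\sum_{s'}V^*(s')p(s,a^\star,s')$. The key observation is that the augmented transition reveals $s_{t-d+1}$ by sampling from $p(s_{t-d},a_{t-d},\cdot)$ and shifts the action queue, so by the law of total probability the distribution of the true state one step ahead, averaged over the random next augmented state $\tilde{s}'$, equals $\sum_{s}p(s|\tilde{s})p(s,a^\star,\cdot)$. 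Consequently $\mathbb{E}_{\tilde{s}'|\tilde{s},a^\star}[U(\tilde{s}')] = \sum_{s}p(s|\tilde{s})\sum_{s'}V^*(s')p(s,a^\star,s')$, and hence $\sum_{s}p(s|\tilde{s})Q^*(s,a^\star) = \tilde{r}(\tilde{s},a^\star)+\gamma\,\mathbb{E}_{\tilde{s}'}[U(\tilde{s}')]$.

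The third step combines these with the defining Bellman equation $\widetilde{V}^{\tilde{\pi}}(\tilde{s}) = \tilde{r}(\tilde{s},a^\star)+\gamma\,\mathbb{E}_{\tilde{s}'}[\widetilde{V}^{\tilde{\pi}}(\tilde{s}')]$. Setting $\delta(\tilde{s}):=U(\tilde{s})-\widetilde{V}^{\tilde{\pi}}(\tilde{s})$ and subtracting, the shared reward and one-step-lookahead terms cancel and I obtain the recursion $\delta(\tilde{s}) = g(\tilde{s}) + \gamma\,\mathbb{E}_{\tilde{s}'}[\delta(\tilde{s}')]$. Taking the supremum over $\tilde{s}$ and inserting the Step-1 bound yields $\sup_{\tilde{s}}\delta(\tilde{s}) \le \frac{R_{max}}{1-\gamma}\left(1-\frac{1}{|\mathcal{A}|}\right) + \gamma\sup_{\tilde{s}}\delta(\tilde{s})$, so $\sup_{\tilde{s}}\delta(\tilde{s}) \le \frac{R_{max}}{(1-\gamma)^2}\left(1-\frac{1}{|\mathcal{A}|}\right)$, which rearranges to the claimed lower bound on $\widetilde{V}^{\tilde{\pi}}(\tilde{s})$.

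I expect the main obstacle to be the consistency identity in Step 2: one must argue carefully that marginalizing the conditional state distribution $p(s'|\tilde{s}')$ over the freshly observed component $s_{t-d+1}$ reproduces exactly the one-step propagated distribution $\sum_{s}p(s|\tilde{s})p(s,a^\star,s')$ in $\mathcal{M}$. This is a tower-property computation that relies on the conditional independence of $s_{t+1}$ from the discarded pair $(s_{t-d},a_{t-d})$ given the new augmented state $\tilde{s}_{t+1}$, and getting this conditioning right is the delicate part. A secondary point is that Step 1 implicitly uses $Q^*(s,a)\ge 0$ (equivalently non-negative rewards); if rewards may be negative, this step and the stated constant would require adjustment.
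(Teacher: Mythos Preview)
Your proof is correct and takes a genuinely different route from the paper's.

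Both arguments share the same per-step inequality: your Step~1 is exactly the paper's Lemma~\ref{lemma_3}, namely $\max_a \mathbb{E}_{s\sim\mu}[Q^*(s,a)] \ge \frac{1}{|\mathcal{A}|}\mathbb{E}_{s\sim\mu}[V^*(s)]$, and both proofs implicitly need $Q^*\ge 0$ for the step $\max_a Q^*(s,a)\le \sum_a Q^*(s,a)$ (the paper uses it silently in Lemma~\ref{lemma_3}).

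The propagation, however, is organized differently. The paper first proves $\tilde V^{\tilde\pi}(\tilde s)=\mathbb{E}_{s|\tilde s}[V^{\tilde\pi}(s)]$ (Lemma~\ref{lemma_2}), then invokes the Kakade--Langford performance-difference lemma in the \emph{true} MDP $\mathcal{M}$ to expand $V^{\tilde\pi}-V^{\pi^*}$ as $\sum_k\gamma^k\big(Q^{\pi^*}(s_{t+k},a_{t+k})-V^{\pi^*}(s_{t+k})\big)$, and bounds each summand via Lemma~\ref{lemma_3}. You instead stay inside the \emph{augmented} MDP: your Step~2 consistency identity shows that $\mathbb{E}[Q^*(s,a^\star)|\tilde s]=\tilde r(\tilde s,a^\star)+\gamma\,\mathbb{E}_{\tilde s'}[U(\tilde s')]$, you match this against the Bellman equation for $\tilde V^{\tilde\pi}$, and you close the resulting recursion $\delta(\tilde s)=g(\tilde s)+\gamma\,\mathbb{E}_{\tilde s'}[\delta(\tilde s')]$ by a supremum/contraction argument.

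Your approach is more self-contained (no appeal to an external lemma) and makes the origin of the $(1-\gamma)^{-2}$ factor transparent as ``one $(1-\gamma)$ from bounding $g$, one from summing the recursion.'' The paper's approach sits closer to the standard advantage-decomposition toolkit. The conditional-independence worry you raise in Step~2 is legitimate but your sketch resolves it: given $\tilde s_{t+1}=(s_{t-d+1},a_{t-d+1},\dots,a_t)$, the Markov property of $\mathcal{M}$ makes $s_{t+1}$ independent of the discarded pair $(s_{t-d},a_{t-d})$, so the tower property yields $\mathbb{E}_{\tilde s'|\tilde s,a^\star}[U(\tilde s')]=\mathbb{E}[V^*(s_{t+1})\,|\,\tilde s_t,a_t]$ as you need.
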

\begin{proof}
We first mention and prove the lemmas required for the proof, and then continue to the final proof.
Lemma \ref{lemma_2} relates the value function for $\mathcal{M}$ and $\widetilde{\mathcal{M}}$ under same policy $\tilde{\pi}$. Lemma \ref{lemma_3} relates expected value using the oracle aided policy and the state-action value using oracle aided policy.
\begin{lemma}\label{lemma_2}
Value function of a policy $\tilde{\pi}(a|\tilde{s})$ for augmented MDP $\tilde{\mathcal{M}}$ is related to value function for same policy under true MDP $\mathcal{M}$ is related as
\begin{align}
    \tilde{V}^{\tilde{\pi}}(\tilde{s}) &= \mathbb{E}_{s}\left[V^{\tilde{\pi}}(s)|\tilde{s}\right]
\end{align}

where $V^{\tilde{\pi}}(s)$ is value function for policy $\tilde{\pi}$ with MDP $\mathcal{M}$.
\end{lemma}
\begin{proof}
Note that the value function for the augmented MDP $\tilde{\mathcal{M}}$ is expected cumulative discounted rewards collected by the agent when it starts from state $\tilde{s_t}$, and follows a policy $\tilde{\pi}$. Thus from the definition of $\tilde{V}^{\tilde{\pi}}(\tilde{s})$, we have
\begin{align}
    \tilde{V}^{\tilde{\pi}}(\tilde{s}) &= \mathbb{E}_{\tilde{\pi}}\left[\sum_{k=0}^{\infty}\gamma^{t+k}R_{t+k}|\tilde{s}_t=\tilde{s}\right]\\
    &= \sum_{k=0}^{\infty}\gamma^k\mathbb{E}_{\tilde{\pi}}\left[R_{t+k}|\tilde{s}_t=\tilde{s}\right]\label{eq:lin_exp_val_fn}\\
    &= \sum_{k=0}^{\infty}\gamma^k\mathbb{E}_{\tilde{\pi}}\left[R_{t+k}|s_{t+k}, a_{t+k}\right]\mathbb{P}\left[s_{t+k}|\tilde{s}_t, \tilde{\pi}\right]\label{eq:use_lemma_1}\\
    &= \sum_{k=0}^{\infty}\gamma^k\mathbb{E}_{\tilde{\pi}}\left[R_{t+k}|s_{t+k}, a_{t+k}\right]\sum_{s_t\in\mathcal{S}}\mathbb{P}\left[s_{t+k}|s_t, \tilde{\pi}\right]\mathbb{P}\left[s_t|\tilde{s}_t \right]\label{eq:use_iter_exp}\\
    &= \sum_{s_t\in\mathcal{S}}\left(\sum_{k=0}^{\infty}\gamma^k\mathbb{E}_{\tilde{\pi}}\left[R_{t+k}|s_{t+k}, a_{t+k}\right]\mathbb{P}\left[s_{t+k}|s_t, \tilde{\pi}\right]\right)\mathbb{P}\left[s_t|\tilde{s}_t \right]\nonumber\\
    &= \sum_{s_t\in\mathcal{S}}V^{\tilde{\pi}}(s)\mathbb{P}\left[s_t|\tilde{s}_t \right]
\end{align}

Equation \eqref{eq:use_lemma_1} follows from Lemma \ref{key_lemma}. Equation \eqref{eq:use_iter_exp} follows the fact that if the agent knew the true state and followed the policy $\tilde{\pi}$, then the environment state evolution probabilities would remain the same.
\end{proof}
\begin{lemma}\label{lemma_3}
For the optimal policy $\pi^*$, and distribution over initial state $s$, $\mu$,
\begin{align}
    \arg\max_{a}\mathbb{E}_{s\sim\mu}\left[Q^*(s,a)\right] \geq \frac{1}{\mathcal{A}}\mathbb{E}_{s\sim\mu}\left[V^*(s)\right]
\end{align}

\end{lemma}
\begin{proof}Let initial state follow some distribution $\mu$, or $\mathbb{P}\left[s_t=s\right]\sim \mu$. Then we have,
\begin{align}
    \mathbb{E}_{s\sim\mu}\left[V^*(s)\right] &= \mathbb{E}_{s\sim\mu}\left[\max_a Q^*(s,a)\right]\\
                                             &\leq \mathbb{E}_{s\sim\mu}\left[\sum_{a\in\mathcal{A}} Q^*(s,a)\right]\\
                                             &= \sum_{a\in\mathcal{A}} \mathbb{E}_{s\sim\mu}\left[Q^*(s,a)\right]\\
                                             &\leq |\mathcal{A}|\max_{a\in\mathcal{A}}\mathbb{E}_{s\sim\mu}\left[Q^*(s,a)\right]
\end{align}
\end{proof}

We can now use Lemma \ref{lemma_2}, Lemma 6.1 from \cite{kakade2002approximately} and Lemma \ref{lemma_3} to find the minimum value of $\widetilde{V}^{\tilde{\pi}}(\tilde{s}) - \mathbb{E}_s\left[V^{\pi^*}(s)|\tilde{s}\right]$. Mathematically we have, 
\begin{align}
    &\tilde{V}^{\tilde{\pi}}(\tilde{s}) - \mathbb{E}_s\left[V^{\pi^*}(s)|\tilde{s}\right] =  \mathbb{E}_s\left[V^{\tilde{\pi}}(s)|\tilde{s}\right] - \mathbb{E}_s\left[V^{\pi^*}(s)|\tilde{s}\right]\label{eq:exp_over_values}\\
                                         &= \mathbb{E}_{s|\tilde{s}, s_{t+1},a_t\sim\tilde{\pi}}\left[\sum_{k=0}^{\infty}\gamma^k{Q}^{\pi^*}(s_{t+k},a_{t+k})-{V}^{\pi^*}(s_{t+k})\right]\label{eq:using_kakade_lemma}\\
                                         &= \sum_{k=0}^{\infty}\gamma^k\mathbb{E}_{s|\tilde{s}, s_{t+1},a_t\sim\tilde{\pi}}\left[{Q}^{\pi^*}(s_{t+k},a_{t+k})\right]\nonumber\\
                                         &\ \ \ \ -\mathbb{E}_{s|\tilde{s}, s_{t+1},a_t\sim\tilde{\pi}}\left[{V}^{\pi^*}(s_{t+k})\right]\\
                                         &\geq \sum_{k=0}^{\infty}\gamma^k\frac{1}{|\mathcal{A}|}\mathbb{E}_{s|\tilde{s}, s_{t+1},a_t\sim\tilde{\pi}}\left[{V}^{\pi^*}(s_{t+k})\right]\nonumber\\
                                         &\ \ \ \ -\mathbb{E}_{s|\tilde{s}, s_{t+1},a_t\sim\tilde{\pi}}\left[{V}^{\pi^*}(s_{t+k})\right]\label{eq:using_lemma_3}\\
                                         &\geq \sum_{k=0}^{\infty}\gamma^k\left(\frac{1}{|\mathcal{A}|}-1\right)\mathbb{E}_{s|\tilde{s}, s_{t+1},a_t\sim\tilde{\pi}}\left[{V}^{\pi^*}(s_{t+k})\right]\\
                                         &\geq \sum_{k=0}^{\infty}\gamma^k\left(\frac{1}{|\mathcal{A}|}-1\right)\frac{R_{max}}{1-\gamma} = \left(\frac{1}{|\mathcal{A}|}-1\right)\frac{R_{max}}{(1-\gamma)^2} \label{eq:bound_min}
\end{align}

Equation (\ref{eq:exp_over_values}) follows from Lemma \ref{lemma_2}, and Equation (\ref{eq:using_kakade_lemma}) comes from using Lemma 6.1 from \cite{kakade2002approximately}. Equation (\ref{eq:using_lemma_3}) comes from the fact that $a_{t+k}$ is chosen from the policy defined in Equation (\ref{eq:final_policy}) for all $k\geq 0$, and Lemma \ref{lemma_3}. Equation (\ref{eq:bound_min}) follows from maximum possible value of $V^\pi(s)$.
\end{proof}

Theorem \ref{final_theorem} states that the proposed policy can suffer a maximum degradation of $\left(\frac{1}{|\mathcal{A}|}-1\right)\frac{R_{max}}{(1-\gamma)^2}$ only from the expected optimal value of unobserved state conditioned on the extended state.

Now, the task that remains is to find $Q^*$ for true MDP. We assume that the delayed state observations can be identified using timestamp or index header. This is a common engineering principle in communication networks to deal with asynchronous packets \cite{walrand2010communication}, and hence it is a valid assumption. This allows to find optimal Q-function for true MDP $\mathcal{M}$. We next provide a detailed algorithm  for the policy described in this section.
\vspace{-.1in}
\section{Algorithm} \label{algorithm}
We now utilize Equation (\ref{eq:final_policy}) to construct {\it Expectation Maximization Q Learning} (EMQL) algorithm (described in Algorithm \ref{alg:EQL}) which is space efficient and which can handle stochastic delays. We note that to calculate the expected Q-value function, we require an algorithm to calculate and store the Q values for each state-action pair. Also the algorithm requires to estimate and store the state transition probabilities to calculate the expected Q value for Equation (\ref{eq:final_policy}). For this, we divide our algorithm into two parts. First part (Algorithm \ref{alg:EQL}) keeps a track of visited state-action pairs, and observed next states and rewards to calculate the Q-value table and the probability transition matrix. Second part is an auxiliary algorithm (Algorithm \ref{alg:Agent_get_action}) which actually implements the policy of Equation (\ref{eq:final_policy}) using the Q-value table and probability transition matrix of the first part. 
\vspace{-.1in}
\subsection{Algorithm Construction} \label{alg_const}
The algorithm takes the state space $\mathcal{S}$, action space $\mathcal{A}$, discount factor $\gamma$, and exploration factor $\epsilon_t$ as inputs. Since, we consider a model based algorithm, we maintain variables corresponding to number of times a state-action pair was visited $(N(\cdot, \cdot))$, rewards obtained for the state-action pair $(R(\cdot, \cdot))$, and the counter for next state from a state-action pair to calculate the estimates of probability transitions $(P(\cdot,\cdot,\cdot))$.

Before beginning any episode the algorithm calculates the estimated probability transitions $(\hat{p}(\cdot, \cdot, \cdot))$ and the expected rewards $(\hat{r}(\cdot, \cdot))$ according the equations (\ref{eq:udpate_prob}) and (\ref{eq:udpate_reward}) respectively. It then updates the Q-function $(Q(\cdot, \cdot))$ for the true MDP as per the equation (\ref{eq:update_Q}). The Q-function learned is for $\mathcal{M}$, which converges faster because of the smaller state space.

In each episode, the algorithm follows an $\epsilon$-greedy approach for exploration. It generates a random number $\mathbb{X}$ from uniform distribution over $[0,1]$. If the random number generated is less than $\epsilon_t$, it plays action randomly uniformly from the action space $\mathcal{A}$. Else the algorithm uses the auxiliary algorithm {\it Get\_EMQL\_Action} to determine the action to be played using the proposed policy of Equation \eqref{eq:final_policy} in Section \ref{policy}. After playing an action, if a new observation is available to the algorithm, it updates the following parameters - the number of times state $s$ was visited and action $a$ was taken in $s$ $N(s, a)$, number of times $s'$ was observed as the next state $P(s, a, s')$,  and reward observed for the state action pair $R(s, a)$.

{\it Get\_EMQL\_Action} is presented in Algorithm \ref{alg:Agent_get_action}, and is described in subsection \ref{complexity}. If the current state is available, the expected Q-function becomes the true Q function. Thus Algorithm \ref{alg:EQL} evaluates Q-value table and probability estimates, and Algorithm \ref{alg:Agent_get_action} computes the expected Q-value for Equation (\ref{eq:final_policy}). Last line of Algorithm \ref{alg:Agent_get_action} then returns the action which maximizes the expected Q-value for the proposed policy. We further note that as the estimates of transition probabilities and Q-values are improved by the Algorithm \ref{alg:EQL}, the gap between the value of the proposed policy and the expected optimal value of the unobserved state can be bounded by Theorem \ref{final_theorem}.
\subsection{Complexity} \label{complexity}
At each time step $t$, Algorithm \ref{alg:Agent_get_action} computes the expected value of the Q-function whenever an action needs to be taken. This requires $\mathcal{O}\left(d|\mathcal{S}|^2+\log\left(|\mathcal{A}|\right)\right)$ computations. For our algorithm, we calculate the probability vector ${\bf \bar{p}}$ which is the conditional probability distribution of the states given the last known state $s_{t-d}$, and the sequence of actions $a_{t-d}, \cdots, a_{t-1}$. For time $t-d$, the true state is known and conditional probability becomes 
\begin{small}
\begin{align}
    {\bf \bar{p}}_{t-d}(s) &= 
        \Bigg\{\begin{array}{@{}cl}
                1,      & s = s_{t-d},\\
                0,   & otherwise
        \end{array}
\end{align}
\end{small}
Then, for each next time step, the probability vector is updated using the following recursion equation.
\begin{small}
\begin{align}
    {\bf \bar{p}}_{t-k} &= \left(\hat{p}(:, a_{t-(d-k)},:)\right)^T{\bf \bar{p}}_{t-k-1}\ \forall\ s\in\mathcal{S}, \forall\ 1\leq k<d
\end{align}
\end{small}
where $\hat{p}(:, a_{t-k-1}, :)$ is the state transition matrix of MDP $\mathcal{M}$ induced by action $a_{t-k-1}$. $\hat{p}(:, a_{t-k-1}, :)$ is obtained using Equation (\ref{eq:udpate_prob}). Since we do this update $d$ times, and each matrix multiplication costs $\mathcal{O}\left(|\mathcal{S}|^2\right)$, the total complexity to compute the state probability at each time step $t$ becomes $\mathcal{O}\left(d|\mathcal{S}|^2\right)$. Fetching the maximum element cost extra $\mathcal{O}\left(\log\left(|\mathcal{A}|\right)\right)$. The overall complexity at any time step $t$ thus becomes $\mathcal{O}\left(d|\mathcal{S}|^2+\log\left(|\mathcal{A}|\right)\right)$.
\begin{small}
	\begin{algorithm} [!htb]
		\small
		\begin{algorithmic}
			\STATE {\bf Input:} $\mathcal{S}$, $\mathcal{A}$, $\gamma$, $\epsilon_t$
			\FOR{$s \in \mathcal{S}$}
			    \FOR{$a \in \mathcal{A}$}
			        \STATE $\hat{p}(s,a,s') = 0, \hat{r}(s,a) = 0 $
			        \STATE $P(s,a,s') = 0, N(s,a) = 0 $
			        \STATE $R(s,a) = 0, Q(s,a) = 0$
			    \ENDFOR
			\ENDFOR
            \WHILE{1}
                \FOR{$s \in \mathcal{S}$}
                    \FOR{$a \in \mathcal{A}$}
                        \STATE Update probability and reward estimates
                            \begin{small}
                            \begin{align}
                        \hat{p}(s,a,s') &= \frac{P(s,a,s')}{\max{(1,N(s,a))}}\label{eq:udpate_prob}\\
                        \hat{r}(s,a) &= \frac{R(s,a)}{\max{(1,N(s,a))}}\label{eq:udpate_reward}
                            \end{align}
                            \end{small}
                    \ENDFOR
                    \STATE $V(s) = \max_{a}Q(s,a)$
                \ENDFOR
                \FOR{$s \in \mathcal{S}$}
                    \FOR{$a \in \mathcal{A}$}
                        \STATE Update Q function
                        \begin{small}
                        \begin{align}
                            Q(s,a) = \hat{r}(s,a) + \sum_{s'\in\mathcal{S}}\gamma V(s')\hat{p}(s,a,s') \label{eq:update_Q}
                        \end{align}
                        \end{small}
                    \ENDFOR
                \ENDFOR
                \STATE $t=0$
                \FOR{ $t = 1, 2, \cdots$}
                    \STATE $t+=1$
                    \STATE Known STATE $s_{t-d}$
                    \IF{$\mathbb{X}\sim U(0,1) < \epsilon_t$}
                        \STATE play $a_t$ randomly uniFORmly from $\mathcal{A}$
                    \ELSE
                        \STATE play $a_t$ = {\it Get\_EMQL\_Action}$\left(s_{t-d}, a_{t-d}, \cdots, a_{t-1}\right)$
                    \ENDIF
                    \IF{Observation $(s_{t-(d-1)}, r_{t-(d-1)})$ available}
                        \STATE $R(s_{t-d}, a_{t-d})\ += r_{t-(d-1)}$
                        \STATE $P(s_{t-d}, a_{t-d}, s_{t-(d-1)})\ += 1$
                        \STATE $N(s_{t-d}, a_{t-d})\ += 1$
                    \ENDIF
                \ENDFOR
            \ENDWHILE
		\end{algorithmic}
		\caption{{\it Expectation Maximization Q-Learning} (EMQL)}\label{alg:EQL}
	\end{algorithm}
	\begin{algorithm} [!htb]
		\small
		\begin{algorithmic}
			\STATE {\bf Input:} $\mathcal{S}$, $\mathcal{A}$, $\hat{p}$, $Q$, $\left(s_{t-d}, a_{t-d}, \cdots, a_{t-1}\right)$
			\STATE {\bf Output:} Estimated greedy action $a_t$
			\STATE ${\bf \bar{p}} = [0,\cdots,0]$, vector of length $|\mathcal{S}|$
			\STATE ${\bf \bar{p}}[s_{t-d}] = 1$
			\FOR{$0\leq k < d$}
			    \STATE ${\bf \bar{p}} = \left(\hat{p}(:, a_{t-(d-k)},:)\right)^T{\bf \bar{p}}$
			\ENDFOR
			\STATE ${\bf \bar{Q}} = {\bf \bar{p}}^T Q$
			\STATE Return $\arg\max_a {\bf \bar{Q}}$
		\end{algorithmic}
		\caption{{\it Get\_EMQL\_Action}}\label{alg:Agent_get_action}
	\end{algorithm}
\end{small}


\section{Evaluation} \label{evaluation}
We evaluate our algorithm EMQL on OpenAI Gym platform \cite{Brockman2016OpenAIG}. We consider the standard Frozen Lake ($8\times 8$ grid), and Cart Pole problem of OpenAI Gym which is as per the description in \cite{sutton2018reinforcement}. Frozen Lake environment has a discrete state space and Cart Pole environment has continuous state space with discrete action space.

We compare our algorithm with Extended MDP formulation by \cite{altman1992closed},  Model Based Simulation (MBS) algorithm of \cite{walsh2009learning}, and dQ algorithm of \cite{schuitema2010control} for constant delays. We also compare the proposed EMQL algorithm with MBS algorithm for stochastic delays. The metric of comparison is total cumulative reward accumulated at the end of each episode averaged over last 50 episodes. Exploration factor is time dependent and is chosen as $\epsilon_t = \frac{\log{\left(|\mathcal{S}|\sum_{s,a}N(s,a)+1\right)}}{\sum_{a}N(s,a)+1}\ \ \forall s,a$. This choice of exploration factor is same across all simulations. For stochastic delays, creating an augmented MDP is not feasible as delays can be arbitrarily large. We considered $50$ iterations of Frozen Lake environment, and $20$ iterations of Cart Pole environment. Each iteration is trained over $1000$ episodes.

For constant delays, we chose delays in the range of $d\in\{2,4\}$. For stochastic delays, each observation was independently delayed by delays generated using a geometric distribution with parameter $p$. The expected delay for this distribution is $\frac{1}{1-p}$. We note that, this may create asynchronous observations as delay $d_1$ of observation at $t_1$ may be higher than delay $d_2$ of observation at $t_2$, where $t_1 + d_1 > t_2+d_2$. The issue of asynchronous delays can be dealt by introducing time stamps in observations.

\subsection{Results} \label{eval_results}
Simulations results based for both Frozen Lake and Cart Pole environment are presented in Figure \ref{fig:Constant Delays} for constant delays. For stochastic delays the results are presented in Figure \ref{fig:Stochastic_delays}. In both figures, median of rewards in each iteration is plotted along with the top and bottom quantiles. 

\begin{figure}
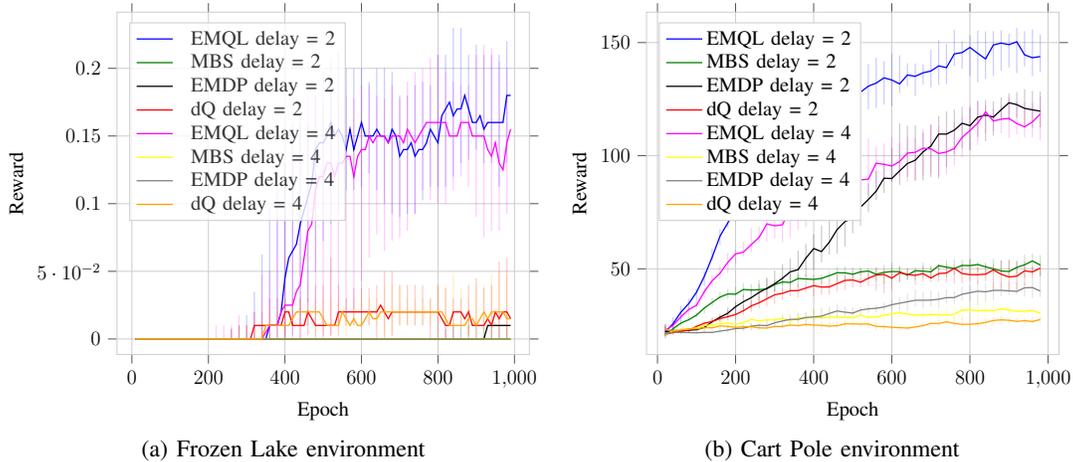

    \centering
    \subfloat[Frozen Lake environment]{%
        \input{figures/frozen_lake_constant.tex}%
        \label{fig:fl_fig_constant}
    } 
    \subfloat[Cart Pole environment]{%
        \input{figures/cart_pole_constant.tex}%
        \label{fig:cp_fig_constant}
    } 
    \caption{Reward accumulated by EMQL, MBS, EMDP, and dQ-learning algorithm with constant delays for Frozen Lake and Cart Pole environments. EMQL achieves higher average reward per episodes compared to other algorithms.}
    \label{fig:Constant Delays}
\end{figure}

\vspace{-0.05in}
As observed in Figure \ref{fig:Constant Delays}, total reward per episode is higher for EMQL algorithm compared to all other algorithms (MBS, dQ, EMDP) in the presence of constant delays. As delay increases, gap between the rewards also increase between the two algorithms. For MBS algorithm, this can be reasoned as - with large values of delays the state with largest likelihood might have lower probability of occurring.

For dQ algorithm, the reason for increasing difference in accumulated rewards can be credited to the memoryless algorithm. Also, Extended MDP algorithm is slow in convergence because of larger state space which grows exponentially. This reduction in convergence speed is visible in Figure \ref{fig:cp_fig_constant} where delay of 4 time steps cause a significant drop in performance of Extended MDP algorithm. 

For stochastic delays, the difference between the rewards accumulated by EMQL, and MBS algorithm increases when the probability of delay increase. For Frozen Lake environment, the proposed EMQL algorithm doesn't suffer much degradation even by increasing the expected delay from $1$ unit to $3$ units. However, MBS algorithm is not able to achieve similar performance for small expected delays.
In Cart Pole environment, EMQL algorithm beats the MBS algorithm significantly even when the stochastic delays are geometric distributed with expected delay of $1$ unit in Figure \ref{fig:cp_fig_sto}.

\begin{figure}
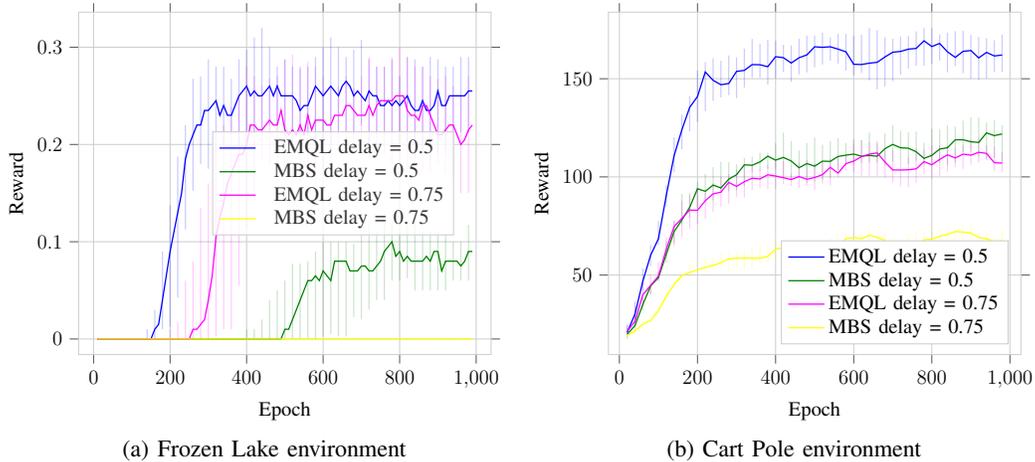

    \centering
    
    \subfloat[Frozen Lake environment]{%
        \input{figures/frozen_lake_stochastic.tex}%
        \label{fig:fl_fig_sto}
    } 
    \subfloat[Cart Pole environment]{%
        \input{figures/cart_pole_stochastic.tex}%
        \label{fig:cp_fig_sto}
    } 
    \caption{Reward accumulated by EMQL, MBS algorithms with stochastic delays for Frozen Lake and Cart Pole environments. EMQL achieves higher average reward per episodes compared to MBS algorithm when expected delays are high.}
    \label{fig:Stochastic_delays}
\end{figure}


\section{Conclusion}\label{sec:concl}
We considered the problem of delays in observation updates for a reinforcement learning agent. The current state of the environment is not immediately available to the agent. We proved that the expected immediate rewards generated for MDP with delays is same as expected immediate rewards generated for corresponding extended MDP without delays.  We proposed a new policy which can handle stochastic delays by optimizing on optimal Q-function of the true MDP. We then provided a lower bound on the value function for all states following the proposed policy. Based on this policy, we proposed a new algorithm, {\it Expectation Maximization Q-Learning} (EMQL), which is robust under constant, and stochastic delays. Using the knowledge of latest available state, sequence of actions, estimated transition probabilities, and reward distributions of the underlying MDP, we determine the best action which maximizes the expected reward for the unobserved state. Evaluations demonstrate the improvement over existing algorithms under constant, and stochastic delays. 


\bibliographystyle{plain}
\bibliography{ijcai19}

\begin{thebibliography}{10}

\bibitem{altman1992closed}
Eitan Altman and Philippe Nain.
\newblock {\em Closed-loop control with delayed information}, volume~20.
\newblock ACM, 1992.

\bibitem{Arabnejad:2017:CRL:3101112.3101121}
Hamid Arabnejad, Claus Pahl, Pooyan Jamshidi, and Giovani Estrada.
\newblock A comparison of reinforcement learning techniques for fuzzy cloud
  auto-scaling.
\newblock In {\em Proceedings of the 17th IEEE/ACM International Symposium on
  Cluster, Cloud and Grid Computing}, CCGrid '17, pages 64--73, Piscataway, NJ,
  USA, 2017. IEEE Press.

\bibitem{Brockman2016OpenAIG}
Greg Brockman, Vicki Cheung, Ludwig Pettersson, Jonas Schneider, John Schulman,
  Jie Tang, and Wojciech Zaremba.
\newblock Openai gym.
\newblock {\em CoRR}, abs/1606.01540, 2016.

\bibitem{changuel2012online}
Nesrine Changuel, Bessem Sayadi, and Michel Kieffer.
\newblock Online learning for qoe-based video streaming to mobile receivers.
\newblock In {\em 2012 IEEE Globecom Workshops}, pages 1319--1324. IEEE, 2012.

\bibitem{clavera2018learning}
Ignasi Clavera, Anusha Nagabandi, Simin Liu, Ronald~S. Fearing, Pieter Abbeel,
  Sergey Levine, and Chelsea Finn.
\newblock Learning to adapt in dynamic, real-world environments through
  meta-reinforcement learning.
\newblock In {\em International Conference on Learning Representations}, 2019.

\bibitem{deng2017deep}
Yue Deng, Feng Bao, Youyong Kong, Zhiquan Ren, and Qionghai Dai.
\newblock Deep direct reinforcement learning for financial signal
  representation and trading.
\newblock {\em IEEE transactions on neural networks and learning systems},
  28(3):653--664, 2017.

\bibitem{ghosh2017rate}
Arnob Ghosh, Vaneet Aggarwal, and Feng Qian.
\newblock A rate adaptation algorithm for tile-based 360-degree video
  streaming.
\newblock {\em arXiv preprint arXiv:1704.08215}, 2017.

\bibitem{jin2018q}
Chi Jin, Zeyuan Allen-Zhu, Sebastien Bubeck, and Michael~I Jordan.
\newblock Is q-learning provably efficient?
\newblock In {\em Advances in Neural Information Processing Systems}, pages
  4868--4878, 2018.

\bibitem{kakade2002approximately}
Sham Kakade and John Langford.
\newblock Approximately optimal approximate reinforcement learning.
\newblock In {\em ICML}, volume~2, pages 267--274, 2002.

\bibitem{katsikopoulos2003markov}
Konstantinos~V Katsikopoulos and Sascha~E Engelbrecht.
\newblock Markov decision processes with delays and asynchronous cost
  collection.
\newblock {\em IEEE transactions on automatic control}, 48(4):568--574, 2003.

\bibitem{kuznetsova2013reinforcement}
Elizaveta Kuznetsova, Yan-Fu Li, Carlos Ruiz, Enrico Zio, Graham Ault, and
  Keith Bell.
\newblock Reinforcement learning for microgrid energy management.
\newblock {\em Energy}, 59:133--146, 2013.

\bibitem{liu2015impact}
Shichao Liu, Xiaoyu Wang, and Peter~Xiaoping Liu.
\newblock Impact of communication delays on secondary frequency control in an
  islanded microgrid.
\newblock {\em IEEE Transactions on Industrial Electronics}, 62(4):2021--2031,
  2015.

\bibitem{mahmood2018setting}
A~Rupam Mahmood, Dmytro Korenkevych, Brent~J Komer, and James Bergstra.
\newblock Setting up a reinforcement learning task with a real-world robot.
\newblock In {\em 2018 IEEE/RSJ International Conference on Intelligent Robots
  and Systems (IROS)}, pages 4635--4640. IEEE, 2018.

\bibitem{puterman2014markov}
Martin~L Puterman.
\newblock {\em Markov decision processes: discrete stochastic dynamic
  programming}.
\newblock John Wiley \& Sons, 2014.

\bibitem{schuitema2010control}
Erik Schuitema, Lucian Bu{\c{s}}oniu, Robert Babu{\v{s}}ka, and Pieter Jonker.
\newblock Control delay in reinforcement learning for real-time dynamic
  systems: a memoryless approach.
\newblock In {\em 2010 IEEE/RSJ International Conference on Intelligent Robots
  and Systems}, pages 3226--3231. IEEE, 2010.

\bibitem{sutton2018reinforcement}
Richard~S Sutton and Andrew~G Barto.
\newblock {\em Reinforcement learning: An introduction}.
\newblock MIT press, 2018.

\bibitem{walrand2010communication}
Jean Walrand and Shyam Parekh.
\newblock Communication networks: a concise introduction.
\newblock {\em Synthesis Lectures on Communication Networks}, 3(1):1--192,
  2010.

\bibitem{walsh2009learning}
Thomas~J Walsh, Ali Nouri, Lihong Li, and Michael~L Littman.
\newblock Learning and planning in environments with delayed feedback.
\newblock {\em Autonomous Agents and Multi-Agent Systems}, 18(1):83, 2009.

\end{thebibliography}

\end{document}